\newtheorem{theorem}{Theorem}
\newtheorem{corollary}{Corollary}
\def\T{{ \mathrm{\scriptscriptstyle T} }}
\begin{document}
%
\title{Learning the Number of Autoregressive Mixtures in \\ Time Series Using the Gap Statistics}

\author{\IEEEauthorblockN{Jie Ding, Mohammad Noshad and Vahid Tarokh}\\
\IEEEauthorblockA{John A. Paulson School of Engineering and Applied Sciences\\
Harvard University \\
Cambridge, MA 02138, USA\\
Email: jieding@g.harvard.edu, mnoshad@seas.harvard.edu, vahid@seas.harvard.edu}
}


\markboth{}{Shell \MakeLowercase{\textit{et al.}}: Bare Demo of IEEEtran.cls for Journals}

\maketitle
\thispagestyle{empty}
\pagestyle{empty}

\begin{abstract}
Using a proper model to characterize a time series is crucial in making accurate predictions. In this work we use time-varying autoregressive process (TVAR) to describe non-stationary time series and model it as a mixture of multiple stable autoregressive (AR) processes.
We introduce a new model selection technique based on Gap statistics to learn the appropriate number of AR filters needed to model a time series. We define a new distance measure between stable AR filters and draw a reference curve that is used to measure how much adding a new AR filter improves the performance of the model, and then choose the  number of AR filters that has the maximum gap with the reference curve.
To that end, we propose a new method in order to generate uniform random stable AR filters in root domain.
Numerical results are provided demonstrating the performance of the proposed approach.
\end{abstract}
\begin{IEEEkeywords}
Gap statistics; stable autoregressive filters; time-varying autoregressive process;
uniform distribution.
\end{IEEEkeywords}

\section{Introduction}
\label{sec:Intro}
Modeling time-series has been of a great interest for a long time.
A good model not only describes the observed data well, but also avoids over-fitting, which can reduce the predictive power of the model.
Autoregressive (AR) models are one of the most commonly used techniques to model stationary time-series \cite{Time-Series-Anal-94}.
A time-varying autoregressive (TVAR) model is a generalized form of an AR model that is used to describe the non-stationarity of time-series \cite{TVAR-07}.
%
An example of TVAR models is the regime-switching model \cite{Time-Series-Anal-94}, which assumes that a non-stationary stochastic process is composed of different epochs/regimes each of which is a stationary process, and that the regimes switch according to a Markov process. Another example is the model proposed in \cite{AR-Mixture-00}, which uses a mixture of Gaussian AR models to describe time-series and uses an expectation-maximization (EM) algorithm to determine the parameters of the model. 
Wong and Li \cite{AR-Mixture-00} used Akaike information criterion (AIC) \cite{AIC-73} and Bayes information criterion (BIC) \cite{BIC-78} to introduce a penalty on the complexity of the model and estimate the number of AR filters. In general AIC and BIC are shown to be suboptimal for determining the number of modes \cite{Problem-of-AIC-BIC-96}. 

Tibshirani \emph{et al}. \cite{Gap-Statistics-01} introduced an intuitive technique for determining the appropriate number of clusters using Gap statistics. The general idea of the Gap statistics is to identify the number of clusters in a data set by comparing the goodness of fit for the observed data with its expected value
under a reference distribution. In this work we extend the Gap statistics to time-series in order to identify the number of AR filters needed to describe a time-series.
We use a reference curve to measure how much adding a new AR filter improves the model under reference distribution, and then choose the number of filters that has the maximum gap with that reference curve.

In order to derive a reference curve, it is important to first clarify the meaning of the ``goodness of fit''.
In \cite{Gap-Statistics-01}, the ``goodness of fit'' is measured by the sum of squared Euclidean distances of each data point from the center of the cluster it belongs to. But a different measure needs to be used in time-series to evaluate the performance of each model. In this work, our goal is to select models that have higher predictive powers, and thus we define the ``goodness of fit'' measure to be the mean squared prediction error (MSPE). We use MSPE to define a new distance measure between two stable filters in accordance with our need for a reference curve in Gap statistics. Our proposed distance measure differs from the previous distances (e.g., cepstral distance \cite{Dist-for-ARMA-01}, discrete Fourier transform (DFT) \cite{Dis-DFT-94}, principal component analysis (PCA) \cite{Dis-PCA-00}, and discrete wavelet transform (DWT) \cite{Dis-DWT-99}) in that it naturally arises from the MSPE of the one-step prediction.

Computing each point on the reference curve using the new distance measure turns out to be a clustering problem in the space of stable AR filters with a fixed size, which is solved using the Monte Carlo approach. To that end, we introduce an approach to generate uniform random stable filters with equal sizes, and apply the $k$-medoids algorithm 
to approximate the optimal solution for the clustering problem. Numerical simulations show that the accuracy of the proposed Gap statistics in estimating the number of AR filters surpasses that of AIC and BIC.




The remainder of the paper is organized as follows. Section~\ref{sec:Model} describes the model considered in this paper.
 In Section~\ref{sec:Gap} we propose the Gap statistics to estimate the number of AR filters in a time-series. Section~\ref{sec:performance} presents some numerical results to evaluate the performance of the proposed approach. We make our conclusions in Section~\ref{sec:conclusion}.

\section{Model Assumption and Its Estimation}
\label{sec:Model}
In order to model a given time series $\mathbf{X}=\{x_n\}_{n=1}^N$, we assume that each data point depends only on $L$ previous points and that $L$ is known in this paper. We use a time-varying autoregressive (TVAR) model to describe the value at time step $n$ as follows
    \begin{align}
        x_n = \sum_{\ell=1}^{L} \phi_{n\ell} x_{n-\ell}  + \varepsilon_n,
    \end{align}
where $\phi_{n\ell}$'s are real numbers, and $\varepsilon_n \sim \mathcal{N}(0,\sigma^2)$ are independent  Gaussian noise.
Assume that the first $L$ points of a sample set, $x_{0}, \cdots, x_{1-L}$, are known.
The vector form of this equation can be written as
    \begin{align}
        x_n = \bm \phi_n^\T \bm x_n + \varepsilon_n,
    \end{align}
where $\bm \phi_n = [\phi_{n1},\phi_{n2},\cdots,\phi_{nL}]^\T $, and $\bm x_n = [x_{n-1}$, $x_{n-2}$ $\cdots$ $,x_{n-L}]^\T$. In real scenarios, $\bm \phi_n$ is a time-varying vector, and modeling the variations of $\bm \phi_n$ can be complicated.
For simplicity, we assume that $\bm \phi_n \in \{\bm \gamma_1,\dots,\bm \gamma_M \}$, for $n=1,2,\cdots,N$, where $M$ is the number of AR filters used to describe $\mathbf{X}$, and that each $\bm \gamma_m$ is a filter with size $L$. 
We refer to $\bm \gamma_m$ as mode $m$ and call this model multi-mode AR model.

Clearly,  $M$ can be seen as a parameter for a nested family of models, and larger $M$ will fit the observed data better.  But as mentioned before, the predictive power of the model drops if $M$ is too large. Hence, a model selection procedure that identifies the appropriate number of modes is important.
To that end, we evaluate MSPE for a range of values of $M$ which is assumed to contain the number of modes. We first estimate the parameters of each of the candidate models, and then select the number of AR modes  according to the Gap statistics developed in Section~\ref{sec:Gap}.
For simplicity, we further assume the model with the following log-likelihood:
%
\begin{align}
	\log p(\mathbf{X} |  \Theta) &= \sum\limits_{n=1}^N \log p\left(x_{n} \mid \bm x_n \right) \nonumber \\
	&= \sum\limits_{n=1}^N \log \left( \sum_{m=1}^M \alpha_m \mathcal{N} (x_n \mid \bm \gamma_m^\T \bm x_n, \sigma^2)  \right)  \label{lik}
\end{align}
where $\sum_{m=1}^M \alpha_m = 1, \alpha_m \geq 0$ for any fixed $M$, and $\mathcal{N}(x \mid \mu,\sigma^2)$ denotes the density of Gaussian distribution of mean $\mu$ and variance $\sigma^2$ evaluated at $x$, i.e., $\mathcal{N}(x \mid \mu,\sigma^2) = (2\pi)^{-1/2}\sigma^{-1}\exp\{-(x-\mu)^2/(2\sigma^2)\}$.





Let $\Theta = \{\alpha_m, \bm \gamma_m, \sigma^2 | m=1,\cdots,M \}$ be the set of unknown parameters to be estimated.
Next, we briefly describe how to approximate the maximum-likelihood estimation (MLE) of  $\Theta$.
Though computing the MLE in (\ref{lik}) is not tractable, it can be approximated by a local maximum via the EM algorithm \cite{dempster1977maximum}.
Let $\mathbf{Z} = \{\bm z_n\}_{n=1}^{N}$ be the membership labels with $\bm z_n = [z_{n1}, z_{n2},\cdots, z_{nM}]^\T$, where
$$
z_{nm}=\left\{
\begin{aligned}
1 & \textrm{ if $\bm \phi_n = \bm \gamma_m$} \\
0 & \textrm{ otherwise}
\end{aligned}
\right..
$$
The joint probability of $\mathbf{X}$ and $\bm{Z}$ can be written as a product of conditional probabilities as
\begin{align*}
	p(\mathbf{X}, \bm Z\mid \bm \Theta) =  \prod\limits_{n=1}^N  \prod\limits_{m=1}^M \left( \alpha_m p(x_n \mid \bm x_n ) \right)^{z_{nm}}.
\end{align*}
Thus the complete log-likelihood is 
\begin{align} \label{complete_lik}
	\log p(\mathbf{X}, \mathbf{Z}\mid  \Theta) =  \sum\limits_{n=1}^N  \sum\limits_{m=1}^M z_{nm} \log \left( \alpha_m p(x_n \mid \bm x_n ) \right).
\end{align}
The EM algorithm produces a sequence of estimates by recursive application of E-step and M-step until some convergence criterion is achieved.
For brevity, we provide the EM  formulas below without detailed derivations. We note that the M-step uses a coordinate ascent algorithm to find a local maxima.
\\{\bf E-Step:} We take the expectation of (\ref{complete_lik}) with respect to the missing data $\bm Z$ given the recent estimated unknown parameters, and obtain the following function (also referred to as the ``Q function'')   
\begin{align*}
	Q(\Theta \mid \bm X, \bm Z, \Theta^{\textrm{old}}) =  \sum\limits_{n=1}^N  \sum\limits_{m=1}^M w_{nm} \log \left( \alpha_m p(x_n \mid \bm x_n ) \right),
\end{align*}
where   
\begin{align*}
	w_{nm} =  \frac{\alpha_m \mathcal{N}(x_n \mid \bm \gamma_m^\T \bm x_n, \sigma^2) }{\sum\limits_{m'=1}^M \alpha_{m'} \mathcal{N}(x_n \mid \bm \gamma_{m'}^\T \bm x_n, \sigma^2)}.
\end{align*}
We note that the parameters involved in the right-hand side of the  equation above take values from the last update.
The 	``old'' superscriptions are omitted for brevity.
In other words, the E-step replaces the ``missing data'' $z_{nm}$   in Equation (\ref{complete_lik}) by its expected values $w_{nm} $ 
for $n=1,\cdots, N, \, m = 1,\cdots, M$.
\\{\bf M-Step:}  Letting the derivatives of the Q function be zero leads to a coupled non-linear  system
that has no closed-form solution. 
Thus, our best hope is an approximation to the solution; for this we use the coordinate ascent algorithm to obtain a local maximum. 
For each $\alpha_m $, we apply the Lagrange method with constraint $\sum_{m=1}^M \alpha_m =1 $ to obtain 
\begin{align*}
	\alpha_m &=  \frac{ \sum\limits_{n=1}^N w_{nm} }{\sum\limits_{m=1}^M \sum\limits_{n=1}^N w_{nm} } = \frac{ \sum\limits_{n=1}^N w_{nm} }{N},
	\; m= 1,\cdots, M.
\end{align*}
%
%
Taking the partial derivative of the Q function with respect to $\gamma_m$  and then $\sigma^2$ we obtain  the following local maximum 
\begin{align*}
	\bm \gamma_m &=  \left( \sum\limits_{n=1}^N w_{nm} \bm x_n \bm x_n^\T  \right)^{-1} \left( \sum\limits_{n=1}^N w_{nm} x_n \bm x_n \right),
	\, m= 1,\cdots, M, \\
	\sigma^2 &=   \frac{ \sum\limits_{m=1}^M\sum\limits_{n=1}^N w_{nm} \left( x_n-\bm \gamma_m^\T \bm x_n \right)^2 }{N}.
\end{align*}

\section{Gap Statistics to Determine the Number of Modes}
\label{sec:Gap}
In this work we use Gap statistics \cite{Gap-Statistics-01} to estimate the number of AR filters in a time series. In this technique, a data set $\mathfrak{B}=\{y_1,y_2,\cdots,y_F\}$ is clustered into $M$ disjoint clusters $C_1,C_2,\cdots,C_M$
%
by minimizing the following within-cluster sum of distances (WCSD)
\begin{align} \label{objective_general}
      W_M \triangleq   \min\limits_{\mu_1,\cdots, \mu_M} \left\{ \sum_{m=1}^M \sum_{y \in C_m} d(\mu_m,y) \right\} ,
 \end{align}
 where $d(\mu_m,y)$ is the distance of $y$ from $\mu_m$, the center of the $m$th cluster. Each cluster $C_m$ is defined based on $\mu_1,\cdots,\mu_M$ as 
 \begin{align} \label{clusters}
 C_m = \{y  : y \in \mathfrak{B}, \, d(\mu_m,y) \leq  d(\mu_{m'},y) \ \ \forall m' \neq m  \}. 
 \end{align}
After computing $W_M$ for $M=1,2,\cdots,M_{\text{max}}$ where $M_{\text{max}}$ is assumed to be the largest possible number of clusters, the graph of $\log(W_M)$ is standardized by comparing it to its expectation under a non-informative reference distribution.
This can be chosen to be a uniform distribution. 
The point that has the largest difference with the reference curve is selected as the estimated number of clusters.

Let $d(y,\mu_m)$ be the squared Euclidean distance (the most commonly used distance measure for clustering purposes).
Then with this distance, $W_M$ becomes within-cluster sum of squares. However, for clustering AR filters, Euclidean distances have been shown to be ineffective \cite{Dist-for-ARMA-01}. Hence, we introduce a new distance measure in Sec. \ref{sec:distance} that is well-suited for AR clustering.


\subsection{Distance Measure for Autoregressive Processes}
\label{sec:distance}
In order to find a reference curve for Gap statistics,  we derive the distance between two filters based on MSPE.
We assume that the data is generated by a stable filter $\bm \psi_A = [\psi_{A1},\psi_{A2},\cdots,\psi_{AL}]^\T$ with size $L$, i.e.,
    \begin{align}
        x_n = \bm \psi_A^\T \bm x_n + \varepsilon_n, \quad \varepsilon_n &\sim  \mathcal{N} (0, \sigma^2).
    \end{align}
For now we assume that $x_n$ has zero mean. 
Let $\Psi_A(z) = \sum\limits_{\ell=1}^{L}\psi_{A\ell} z^{-\ell}$ be the characteristic polynomial of $\bm \psi_A$, and $a_1, \cdots, a_{L}$ denote the roots of $1-\Psi_A(z)$, i.e.,
    \begin{align}
        1- \Psi_A(z) = \prod\limits_{\ell=1}^{L} \left( 1-a_\ell z^{-1} \right).
    \end{align}
If  $\bm \psi_A$ is stable, the roots $a_1, \cdots, a_{L}$ lie inside the unit circle ($|a_{\ell}| < 1$). When we use $\bm \psi_A$ at time step $n-1$ to predict the value at time $n$, i.e., $\hat{x}_{n}=\bm \psi_A^\T \bm x_n$, the MSPE is
    \begin{align} \label{Pred Error of Filter A}
        E\{ (x_{n} - \hat{x}_{n})^2 \} = \sigma^2.
    \end{align}
Suppose that we use a filter other than $\bm \psi_A$ to predict the value at time $n$. The mis-specified filter is denoted by $\bm \psi_B=[\psi_{B1},\cdots,\psi_{BL}]^\T$. Then the MSPE becomes
    \begin{align} \label{Pred_Error_of_FilterB}
        E\{ (x_{n} - \hat{x}_{n})^2 \} = E\left\{ \left((\bm \psi_A^\T - \bm \psi_B^\T) \bm x_n \right)^2 \right\} + \sigma^2.
    \end{align}
Motivated by Equations (\ref{Pred Error of Filter A}) and (\ref{Pred_Error_of_FilterB}),
we define the distance between filters $\bm \psi_A$ and $\bm \psi_B$ by
    \begin{align} \label{}
        D\left(\bm \psi_A , \bm \psi_B \right) =  E\left\{ \left((\bm \psi_A^\T - \bm \psi_B^\T) \bm x_n\right)^2 \right\},
    \end{align}
which can be calculated using the power spectral density of $x_n$ as
    \begin{align} \label{def_distance}
        D\left(\bm \psi_A , \bm \psi_B \right) =  \frac{\sigma^2}{2 \pi} \int_{-\pi}^{\pi} \frac{\left| \Psi_A (e^{-j\omega} )-\Psi_B (e^{-j\omega} )\right|^2}{ \left|1-\Psi_A (e^{-j\omega} )\right|^2}  d\omega.
    \end{align}
Using $z=e^{j\omega}$ ($j =\sqrt{-1}$), we get
    \begin{align} \label{Dist in terms of z integral}
        D&\left(\bm \psi_A , \bm \psi_B \right) \nonumber\\&=  \frac{\sigma^2}{2 \pi j} \oint_C \frac{\left( \Psi_A (z )-\Psi_B (z )\right)\left( \Psi_A (z^{-1} )-\Psi_B (z^{-1} )\right)}{ \left(1-\Psi_A (z)\right)  \left(1-\Psi_A (z^{-1})\right)} \frac{ dz}{z}.
    \end{align}
Using Cauchy's integral theorem, Equation (\ref{Dist in terms of z integral}) can be written in terms of the roots $a_1,\cdots,a_{L}$ and $b_1,\cdots,b_{L}$ as
    \begin{align} \label{Dist}
        D\left(\bm \psi_A , \bm \psi_B \right) =
        \sigma^2 \sum_{k=1}^{L} \frac{\prod\limits_{\ell=1}^{L} (a_k-b_{\ell})}{a_k \prod\limits_{\substack{\ell=1\\ \ell \neq k}}^{L} (a_k-a_{\ell})} \left( \frac{\prod\limits_{\ell=1}^{L} (1- a_k b_{\ell}^{*})}{\prod\limits_{\ell=1}^{L} (1- a_k a_{\ell}^{*})} -1 \right).
    \end{align}
for $a_k\neq 0, a_k \neq a_{\ell}, k \neq \ell$, where $a^{*}$ is the conjugate of a complex number  $a$.
For the degenerate cases when $a_k=0$ or $a_k = a_{\ell}$, $D(\bm \psi_A , \bm \psi_B )$ reduces to $\lim\limits_{a_k \rightarrow 0} D(\bm \psi_A , \bm \psi_B )$
or $\lim\limits_{a_k \rightarrow a_{\ell}} D(\bm \psi_A , \bm \psi_B )$.
We note that the distance (\ref{Dist}) is not symmetric, i.e., $D\left(\bm \psi_A , \bm \psi_B \right)  \neq D\left(\bm \psi_B , \bm \psi_A \right)$.
%
The distance measure defined in (\ref{Dist}) is proportional to $\sigma^2$, which results in a constant $\log \sigma^2$ in the computation of $\log W_M$ for the reference curve. Since it is the same for different $M$, without loss of generality we can set $\sigma^2=1$.

\subsection{ Generation of Uniformly Distributed Random Filters  }
\label{sec:distribution}
As mentioned before, Gap statistics require a reference curve that is generated by clustering sampled data from a reference distribution, which is usually chosen to be uniform.
Therefore, uniform random generation of stable filters (uniform in coefficient domain) is needed.
Since the roots are used in Equation (\ref{Dist}), we use an approach to generate samples of roots that correspond to uniform samples of coefficients.
%
To simplify the notations, we let $\Lambda(z)= z^L+ \sum\limits_{\ell=1}^{L} \lambda_{\ell} $ denote the polynomial $z^L (1-\Psi(z) )= z^L - \sum\limits_{\ell=1}^{L}\psi_{\ell} z^{L-\ell} $. 

A polynomial is stable (also referred to as Schur stable) if all of its roots lie inside the unit circle. For a polynomial of order $L$, we use $c \le \lfloor L/2 \rfloor$ and $r$ respectively to denote its number of pairs of complex roots and number of real roots. Let 
$$R_L = \{(\lambda_1, \cdots, \lambda_{L} )\mid z^L + \sum\limits_{\ell=1}^{L}\lambda_{\ell} z^{L-\ell} \textrm{ is stable} \} \subset \mathbb{R}^L$$ 
be the coefficient space of all stable polynomials of degree $L$, and let $R_L^{(c)} \subset R_L$ correspond to the polynomials that have $c$ pairs of complex roots. We call $R_L^{(c)}$ a configuration of $R_L$ with parameters $(L,c)$. Clearly, $R_L$ are bounded subspaces of $\mathbb{R}^L$ and $R_L = R_L^{(0)} \cup \cdots \cup R_L^{(\lfloor L/2 \rfloor)}$.

In this section we propose an approach to generate uniformly distributed  polynomials using roots. We first present the following theorem, which helps us to find the relation between the distribution of coefficients of a polynomial and its roots.
\begin{theorem} \label{Thm:Jac-Disc}
The determinant of the Jacobian matrix of the coefficients of a polynomial with respect to its roots is the Vandermonde polynomial of the roots, i.e.,
    \begin{align} \label{Jacob1}
            \det \left( \frac{\partial [\lambda_1,\cdots,\lambda_L]}{\partial [a_1,\cdots,a_L]} \right) =   \prod\limits_{1 \leq u < v \leq L} (a_v-a_u) .
    \end{align}
Furthermore, the volume of $R_L^{(c)}$ is
\begin{align} \label{Vol_D}
\text{Vol}&(R_L^{(c)}) = \int_{(x_1,y_1) \in C_1}\cdots \int_{(x_c, y_c) \in C_1} \int_{a_{2c+1}\in S_1}\cdots  \int_{a_{L}\in S_1} \nonumber \\
	& \prod\limits_{\substack{ 1 \leq u < v \leq L \\ a_1=x_1+jy_1, \cdots, a_c=x_c+jy_c} } \frac{\left|  (a_v-a_u) \right| }{c! (L-2c)!}
	dx_1 dy_1 \cdots da_L,
\end{align}
where $C_1=\{(x,y) \mid x^2+y^2 \leq 1\}, S_1 =\{x \mid -1 \leq x \leq 1 \}$.
\end{theorem}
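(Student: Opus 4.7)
My plan is to establish the two claims of Theorem~\ref{Thm:Jac-Disc} in sequence: the Jacobian identity \eqref{Jacob1} first, then the volume formula \eqref{Vol_D} that it feeds into.

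For the Jacobian identity, I would exploit the factorization $\Lambda(z) = \prod_{\ell=1}^{L}(z - a_\ell)$. Differentiating with respect to $a_j$ yields the generating-function identity
\begin{align*}
    \sum_{i=1}^{L} \frac{\partial \lambda_i}{\partial a_j}\, z^{L-i} \;=\; \frac{\partial \Lambda}{\partial a_j}(z) \;=\; -\prod_{\ell \ne j}(z - a_\ell),
\end{align*}
which, evaluated at $z = a_k$, equals $0$ if $k \ne j$ and $-\prod_{\ell \ne j}(a_j - a_\ell)$ if $k = j$. Read as a matrix product, this is $J^{\T} A = D$, where $J_{ij} = \partial \lambda_i/\partial a_j$, $A_{ik} = a_k^{L-i}$, and $D$ is diagonal with entries $D_{jj} = -\prod_{\ell \ne j}(a_j - a_\ell)$. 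Since $A$ is a row-reversed Vandermonde matrix with $|\det A| = \prod_{u<v}|a_v - a_u|$ and $|\det D| = \prod_{u<v}|a_v - a_u|^2$, taking determinants forces $\det J = \pm \prod_{u<v}(a_v - a_u)$, with the sign depending on the parity of $L$ in a way that is harmless for the subsequent absolute-value computation.

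For \eqref{Vol_D}, I would push forward Lebesgue measure on $R_L^{(c)}$ to root coordinates. Each polynomial in $R_L^{(c)}$ is parameterized by an $L$-tuple of roots consisting of $c$ complex conjugate pairs inside the unit disk and $L-2c$ real roots in $(-1,1)$. Following the ordering of the statement, I would compose the change of variables in two steps: first the real-to-complex map $(x_1, y_1, \ldots, x_c, y_c) \mapsto (a_1, \bar{a}_1, \ldots, a_c, \bar{a}_c)$ at the level of conjugate pairs, whose Jacobian is block-diagonal with $2 \times 2$ blocks $\left(\begin{smallmatrix} 1 & j \\ 1 & -j \end{smallmatrix}\right)$ of absolute determinant $2$ and therefore contributes $2^c$ overall; second, the root-to-coefficient map of Part~1, which contributes $\prod_{u<v}|a_v - a_u|$. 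To convert the resulting oriented integral into a volume, I would divide by the combinatorial overcounting of the parameterization: $(L-2c)!$ for permutations of the real roots, $c!$ for permutations of the conjugate pairs, and $2^c$ for the choice of which member of each pair to label as $a_k$ (equivalently, letting $y_k$ take both signs over $C_1$ instead of restricting to the upper half-disk). The two factors of $2^c$ cancel exactly, leaving the denominator $c!\,(L-2c)!$ and the clean formula \eqref{Vol_D}.

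The main obstacle is the careful bookkeeping of signs and factors of two. Specifically, one must align the $2^c$ introduced by switching from conjugate-pair coordinates $(a_k, \bar{a}_k)$ to real coordinates $(x_k, y_k)$ with the $2^c$ overcounting from integrating over the full disk $C_1$, and then remember to take absolute values throughout so that the parity-dependent sign of $\det J$ does not obstruct the identification with $\prod_{u<v}|a_v - a_u|$ in the integrand of \eqref{Vol_D}.
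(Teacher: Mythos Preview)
Your proposal is correct, and for the volume formula \eqref{Vol_D} it follows essentially the same line as the paper: parameterize $R_L^{(c)}$ by the real/imaginary parts of the conjugate pairs together with the real roots, read off the $2^c c!(L-2c)!$-to-$1$ multiplicity of the root-to-coefficient map, observe that the Jacobian of $(x_k,y_k)\mapsto(a_k,\bar a_k)$ is $(-2j)$ per pair, and watch the two factors of $2^c$ cancel.

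Where you genuinely diverge from the paper is in the proof of \eqref{Jacob1}. The paper argues algebraically: since $\partial\lambda_k/\partial a_u$ is symmetric in the remaining roots, swapping any two roots transposes two columns of the Jacobian, so $\det J$ is an alternating polynomial and hence divisible by the Vandermonde polynomial $V$; then a degree count (both are homogeneous of degree $\binom{L}{2}$) plus a comparison of the coefficient of a single monomial forces $\det J = V$ exactly. Your route is instead a matrix factorization obtained by evaluating the generating identity $\partial_{a_j}\Lambda(z)=-\prod_{\ell\ne j}(z-a_\ell)$ at the roots, giving $J^{\T}A=D$ with $A$ a reversed Vandermonde and $D$ diagonal; taking determinants then yields $|\det J|=|V|$. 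This is a clean and self-contained computation that avoids the divisibility lemma for alternating polynomials, and it makes the appearance of the Vandermonde completely transparent. The trade-off is that, as you note, it only pins down $\det J$ up to a sign, so it proves \eqref{Jacob1} only in absolute value. That is indeed all you need for \eqref{Vol_D}, but if you want the literal equality asserted in the theorem, you would still have to fix the sign by inspecting one term---which is exactly the extra step the paper's argument supplies.
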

\begin{proof}
Let $J(a_1,\cdots,a_L)$  be the value of the left-hand side of Equation (\ref{Jacob1}), which is a polynomial of $a_1,\cdots,a_L$.
For any positive integers $k,u$ and $v$, it is clear that
$$
\frac{\partial \lambda_k}{\partial a_u} =
 \sum\limits_{\substack{ 1 \leq \ell_1 <  \cdots < \ell_{k-1} \leq L,  \\
 					\ell_1, \cdots, \ell_{k-1} \neq u}} a_{\ell_1} \cdots a_{\ell_{k-1}}.
$$
Thus $J$ changes sign under any transposition of the $a_u$ and $a_v$ by properties of the determinant, i.e. $J$ is an alternating polynomial of $a_1,\cdots,a_L$.
It implies that $J$ is divisible by the Vandermonde polynomial $V(a_1,\cdots,a_L) = \prod\limits_{1 \leq u < v \leq L} (a_v-a_u)$ \cite{Num_Theory-08}.
Furthermore, both $V(a_1,\cdots,a_L)$ and $J(a_1,\cdots,a_L)$ are homogeneous polynomials of degree $(L-1)L/2$, and the coefficients of the term $a_{L-1}^{L-1}a_{L-2}^{L-2}\cdots a_{2}$ are both $1$. Therefore, we  obtain $V(a_1,\cdots,a_L) = J(a_1,\cdots,a_L)$.

In order to compute the volume of $R_L^{(c)}$, consider the space
$C_1^c \times S_1^{L-2c} \subset \mathbb{R}^L$.
Each point $(x_1,y_1$,$\cdots$, $a_{2c+1}$,$\cdots$, $a_L) $ in $R_L^{(c)}$ corresponds to a set of roots $ (x_1+jy_1, x_1-jy_1, \cdots$, $a_{2c+1}$, $\cdots$,$a_L)$ of a stable polynomial in $R_L^{(c)}$, and thus there is a $2^c c!(L-2c)! : 1$ mapping from  $C_1^c \times S_1^{L-2c} $ to $R_L^{(c)}$ (due to the permutation of roots).
Therefore,
\begin{align} \label{eqn1}
\text{Vol}&(R_L^{(c)}) = \int_{(x_1,y_1) \in C_1}\cdots \int_{a_{L}\in S_1}  \frac{1}{2^c c! (L-2c)!}  \nonumber \\
	&\left| \det \left( \frac{\partial [\lambda_1,\lambda_2,\cdots, \lambda_{2c+1}, \cdots, \lambda_L]}{\partial [x_1,y_1\cdots,a_{2c+1},\cdots,a_L]} \right) \right| dx_1 dy_1 \cdots da_L .
\end{align}
Since $a_k=x_k  +jy_k, k=1,\cdots, c$ implies that
$$
\det \left( \frac{\partial [a_1,a_2,\cdots, a_{2c+1}, \cdots, a_L]}{\partial [x_1,y_1\cdots,a_{2c+1},\cdots,a_L]} \right) = (-2j)^c,
$$
 we  obtain Equation (\ref{Jacob1}) by combining Equations (\ref{Vol_D})--(\ref{eqn1}).
\end{proof}

Following Theorem \ref{Thm:Jac-Disc}, it is not difficult to obtain the following result. We omit the proof for brevity.
\begin{corollary} \label{corollary:root_draw}
Generating a sample uniformly from $R_L$ can be done via the following three-step procedure:
\begin{enumerate}
\item Randomly draw
$$c \sim Multinomial \left(\frac{\text{Vol}(R_L^{(0)})}{\text{Vol}(R_L)}, \cdots, \frac{\text{Vol}(R_L^{(\lfloor \frac{L}{2} \rfloor)})}{\text{Vol}(R_L)} \right); $$

\item Generate a random sample $(x_1,y_1$,$\cdots$,$x_c,y_c$, $a_{2c+1}$, $\cdots$, $a_L)$ from $R_L^{(c)}$ according to the (unnormalized) density
\begin{align}
\prod\limits_{\substack{ 1 \leq u < v \leq L \\ a_1=x_1+jy_1, \cdots, a_c=x_c+jy_c} }  |a_v-a_u|. \label{root_density}
\end{align}
\item Obtain $(\lambda_1, \cdots, \lambda_L)$ by computing
$$\lambda_k = (-1)^k \sum\limits_{1 \leq \ell_1 < \cdots < \ell_k \leq L} a_{\ell_1} \cdots a_{\ell_k}, \, k=1,\cdots,L.$$
\end{enumerate}
\end{corollary}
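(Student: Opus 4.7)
The plan is to deduce the three-step algorithm directly from the volume identity established in Theorem~\ref{Thm:Jac-Disc}, via a standard change-of-variables argument.

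First, since $R_L = R_L^{(0)} \cup \cdots \cup R_L^{(\lfloor L/2 \rfloor)}$ is a disjoint union up to a Lebesgue-null set (polynomials with coincident roots or a root on the unit circle), a vector $(\lambda_1,\ldots,\lambda_L)$ drawn uniformly from $R_L$ has its number of complex-conjugate pairs $c$ distributed as $\text{Multinomial}\bigl(\text{Vol}(R_L^{(0)})/\text{Vol}(R_L), \ldots, \text{Vol}(R_L^{(\lfloor L/2 \rfloor)})/\text{Vol}(R_L)\bigr)$, and conditionally on $c$ it is uniform on $R_L^{(c)}$. This gives Step~1, and reduces the problem to showing that Steps~2 and~3 together produce a uniform sample on $R_L^{(c)}$ for each fixed $c$.

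Second, I would consider the map $\Phi: C_1^c \times S_1^{L-2c} \to R_L^{(c)}$ sending $(x_1,y_1,\ldots,x_c,y_c,a_{2c+1},\ldots,a_L)$ to the coefficient vector of the monic polynomial with complex roots $x_k \pm jy_k$ and real roots $a_\ell$. This is the composition of the linear map $(x_k,y_k) \mapsto (a_k,a_k^*)$ with the root-to-coefficient map analyzed in Theorem~\ref{Thm:Jac-Disc}. As already observed in the proof of that theorem, $\Phi$ is $2^c c!(L-2c)!$-to-one (permuting the $c$ complex pairs, permuting the $L-2c$ real roots, and flipping the sign of each $y_k$), and its absolute Jacobian determinant equals $2^c \prod_{u<v} |a_v - a_u|$ — the factor $2^c$ coming from $|\det \partial[a]/\partial[x,y,a]| = 2^c$ and the Vandermonde product from (\ref{Jacob1}). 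Hence, if I sample $(x_1,y_1,\ldots,a_L)$ from the density $\rho \propto \prod_{u<v} |a_v - a_u|$ on $C_1^c \times S_1^{L-2c}$, the pushforward density on $R_L^{(c)}$ at $(\lambda_1,\ldots,\lambda_L)$ evaluates to
\[
\sum_{p \in \Phi^{-1}(\lambda)} \frac{\rho(p)}{|\det D\Phi(p)|} = 2^c c!(L-2c)! \cdot \frac{C \prod_{u<v}|a_v - a_u|}{2^c \prod_{u<v}|a_v - a_u|} = c!(L-2c)! \, C,
\]
a constant independent of $(\lambda_1,\ldots,\lambda_L)$, so the pushforward is uniform on $R_L^{(c)}$. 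This establishes Step~2, and the normalizing constant $C$ never needs to be computed explicitly for the sampler.

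Finally, Step~3 is Vieta's formulas applied to $\Lambda(z) = \prod_{k=1}^L (z - a_k)$, giving exactly $\lambda_k = (-1)^k e_k(a_1,\ldots,a_L)$ where $e_k$ is the $k$th elementary symmetric polynomial. The main obstacle — and the reason the authors omit this proof — is the careful bookkeeping of the combinatorial multiplicity $2^c c!(L-2c)!$ together with the factor $(-2j)^c$ arising from the linear change $(x_k,y_k) \mapsto (a_k,a_k^*)$; once both are accounted for correctly, the Jacobian cancels the $|V|$ weight against the target density and leaves a constant on $R_L^{(c)}$, as needed.
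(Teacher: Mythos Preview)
Your proposal is correct and follows exactly the route the paper intends: the authors omit the proof entirely, noting only that the corollary ``is not difficult to obtain'' from Theorem~\ref{Thm:Jac-Disc}, and your change-of-variables argument---decomposing $R_L$ into the strata $R_L^{(c)}$, then pushing forward the weighted density on $C_1^c\times S_1^{L-2c}$ through the root-to-coefficient map with Jacobian $2^c\prod_{u<v}|a_v-a_u|$---is precisely the computation that Theorem~\ref{Thm:Jac-Disc} (and its volume formula~(\ref{Vol_D})) sets up. Your bookkeeping of the $2^c c!(L-2c)!$ multiplicity and the $(-2j)^c$ linear factor matches the paper's own accounting in the proof of Theorem~\ref{Thm:Jac-Disc}, so there is no gap.
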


In practical implementations, the parameters of multinomial distribution in the first step require only one-time computation, and the second step can be realized by a sequence of one-dimensional reject samplings.


\begin{figure}[b!]
     \begin{center}
            \includegraphics[width=3.6 in]{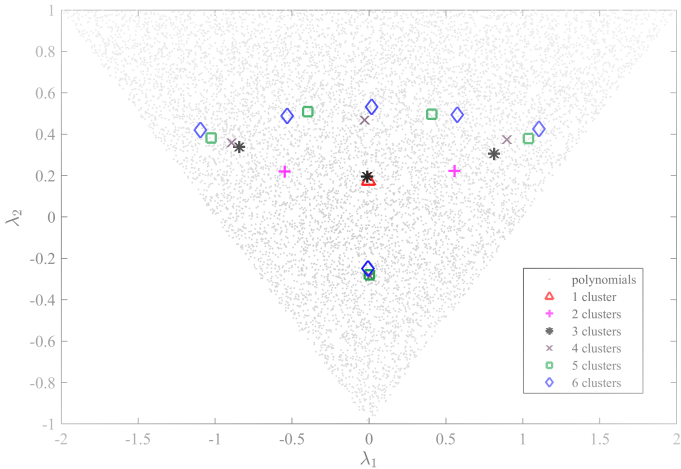}
    \end{center}
    \vspace{0.0 in}
    \caption{The coefficients of 10000 second order stable filters ($z^2+ \lambda_{1}z+\lambda_2 $) that are independently generated from the uniform distribution on $R_2$, and the centers for different number of clusters.}
    \label{Uniform_Poly}
\end{figure}

\begin{figure} [t!]
     \begin{center}
            \includegraphics[width=3.6 in]{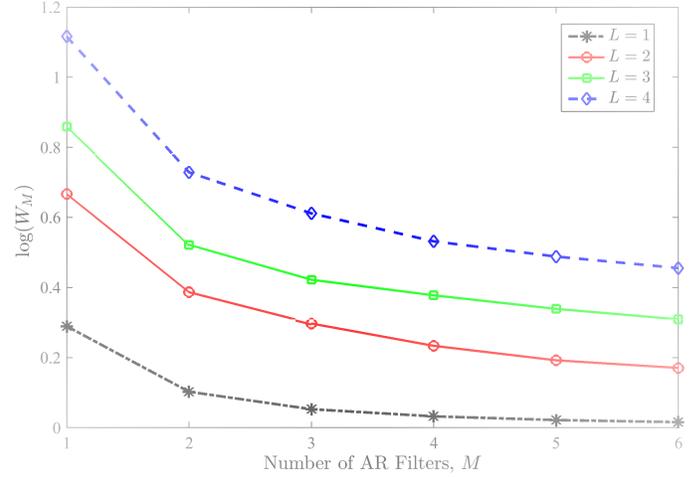}
       \end{center}
       \vspace{-0.0 in}
       \caption{Reference curves for $L = 1, 2, 3,$ and 4 and $M_{\max} = 6$.}
\label{fig:Ref_Curve}
\end{figure}

\subsection{Calculating the Reference Curve}

Based on the  discussions above, the following procedure describes how to derive the reference curve and Gap statistics.
\begin{enumerate}
\item
Generate $F$ uniform random stable filters $\mathfrak{B} = \{\bm \psi_1$, $\bm \psi_2$, $\cdots$, $\bm \psi_F\}$ with a given size $L$, using the technique introduced in Section~\ref{sec:distribution}.

\item
Suppose that $\{1,\cdots,M_{\textrm{max}} \}$ is the candidate set of numbers of modes.
For each $M=1,2,\cdots,M_{\textrm{max}}$, cluster $\mathfrak{B}$   into $M$ disjoint clusters $C_1,C_2,\cdots,C_M$ by minimizing:
\begin{align} \label{objective_filter}
      W_M \triangleq   \min\limits_{\bm \gamma_1,\cdots,\bm \gamma_M } \left\{  \frac{1}{F} \sum_{m=1}^M \sum_{\bm \psi \in C_m} D(\bm \gamma_m, \bm \psi) \right\} +1 ,
 \end{align}
where $D(\bm \gamma_m, \bm \psi)$ has been defined in Section~\ref{sec:distance} and the clusters $C_m$ are similarly defined as in (\ref{clusters}).\footnote{
To make it consistent with the MSPE in (\ref{Pred_Error_of_FilterB}), we put an additional ``1'' on the right hand side of definition (\ref{objective_filter}) compared with (\ref{objective_general}). The ``1'' was used to represent the noise variance $\sigma^2$,  since $\log(\sigma^2)$ becomes a linear term in $\log W_M$ and thus can be negligible. }
For this step, we first generate the matrix whose elements are pairwise distances between sampled filters, and then run the $k$-medoids algorithm \cite{kaufman1987clustering} to approximate the optimum of (\ref{objective_filter}).

\item
Plot the reference curve, which is $\log(W_M)$ against $M$ for $M=1, \cdots,M_{\textrm{max}})$.
We note that the reference curve is model independent.

\item
Plot the empirical curve given the MSPE for $M=1,2,\cdots,M_{\textrm{max}}$, using the observed data, postulated model, and the model fitting approach.
For example, the postulated model in this paper is the mixture of AR introduced in Section \ref{sec:Model}, and the model fitting approach is the EM algorithm.

\item
Finally, the number of AR mixtures that corresponds to the largest gap between the two curves is selected. 

\end{enumerate}

Fig.~\ref{Uniform_Poly} illustrates the sampled coefficients of stable filters of size $2$ randomly generated using the technique described in Section~\ref{sec:distribution}. The centers of the clusters (which are approximated as some of the generated filter samples) obtained using $k$-medoids algorithm are also shown in this figure for $M=1,2,\dots,6$. These centers are calculated based on the average of 20 random instances, each with $1000 $ samples.
Fig.~\ref{fig:Ref_Curve} shows the reference curves for $L=1,2,3$ and 4. Similar to Fig.~\ref{Uniform_Poly}, the reference curves are plotted based on $20$ random instances, each with $1000$ samples.


\section{Numerical Results}
\label{sec:performance}

\begin{table} [t!]
    \vspace{0.1 in}
    \begin{center}
        \includegraphics[width=3.5 in]{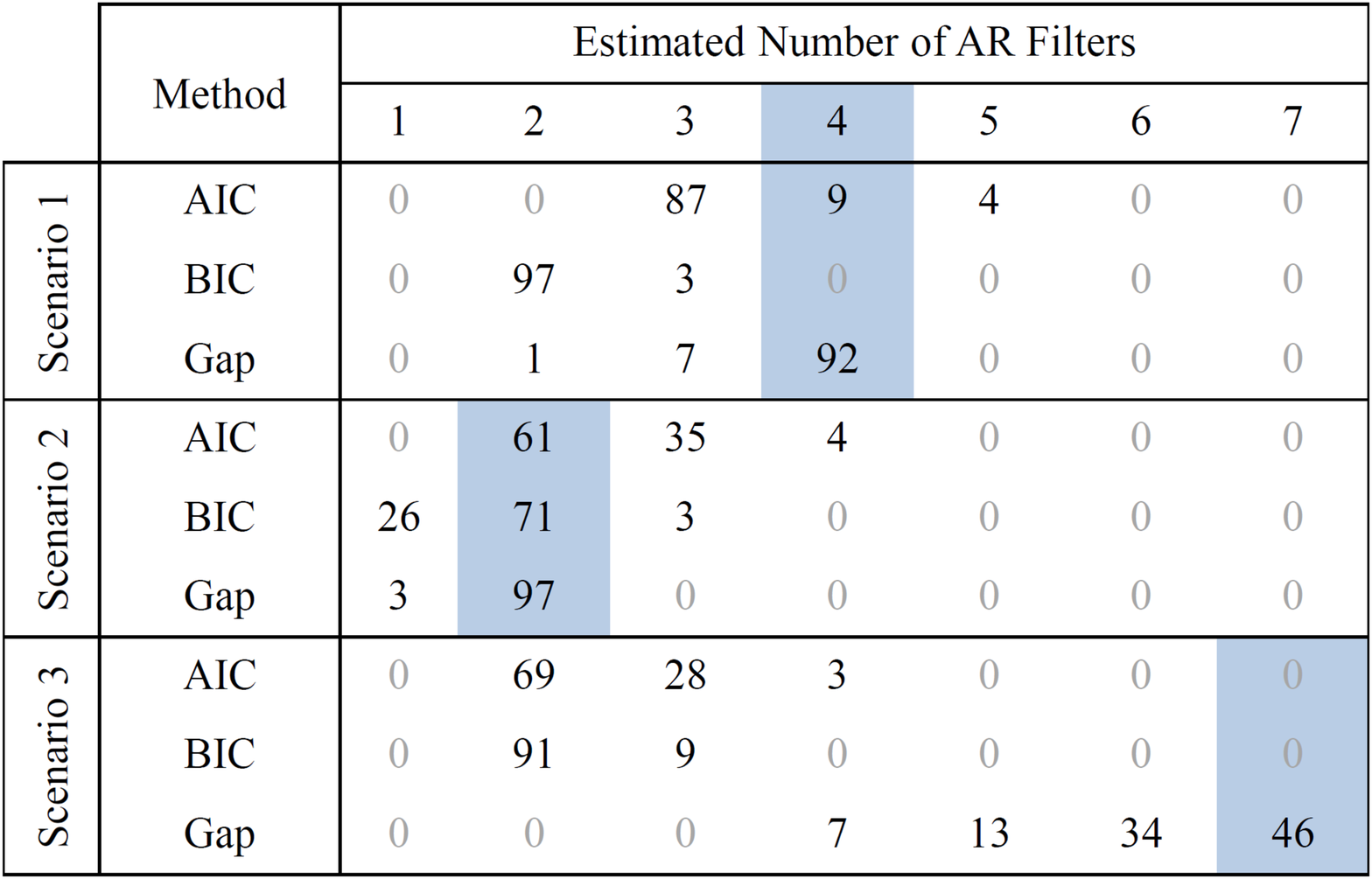}
    \end{center}
    \vspace{-0.0 in}
    \caption{The estimated number of AR filters for three different scenarios using AIC, BIC and Gap statistics (with the true number of filters for each scenario highlighted)}
    \label{tab:results}
\end{table}

We have generated zero-mean time-series with size $N=1400$ using the following three different models to evaluate the performance of the proposed model selection technique.

\emph{Scenario 1}: 4 AR filters with lengths $L=2$. The AR filter at each time-instance is randomly drawn from a multinomial distribution with parameters (0.25, 0.25, 0.25, 0.25).

\emph{Scenario 2}: 2 AR filters with lengths $L=4$. The AR filter at each time-instance is randomly drawn from a Bernoulli distribution with parameters (0.4, 0.6).

\emph{Scenario 3}: 7 AR filters with lengths $L=1$. The time-series is divided into 7 equal parts and only one AR filter is used to draw the data-points in each part.

For each scenario, $100$ time-series of length $1400$ are generated using   uniformly distributed random AR filters. For each time series, EM is run $50$ times with different random initializations to increase the chance that the final estimates are close to the global optimum. Table~\ref{tab:results} shows the estimated number of AR filters using AIC, BIC, and Gap statistics. The true number of filters for each scenario is highlighted. As it can be observed, Gap statistics outperforms AIC and BIC in all of the three scenarios. 
In the third scenario, AIC and BIC are not able to estimate the number of AR filters correctly, and severely underestimate the number of modes. While the Gap statistics finds the correct number of AR filters $46\%$ of the time.

For small $L$  and large $M$, the uniformly generated AR filters are more likely to be close to one another, and thus identifying them as two separate modes becomes more challenging.
Nevertheless, Gap statistics still outperforms AIC and BIC for this scenario.
By increasing $L$, the volume of the space of stable filters $R_L$ explodes and the average distance (defined in (\ref{def_distance})) between two randomly chosen filters becomes larger, which makes the mode separation much easier for large $L$'s.
In that case, the gain of our approach is more pronounced. 


%
%


\section{Conclusions}
\label{sec:conclusion}

In this work, we introduced a new model selection technique based on Gap statistics in order to estimate the number of stable AR mixtures for modeling a given time series.
The Gap statistics was extended to stable filters using a new distance measure between stable AR filters. This distance measure in turn was derived based on mean squared prediction error (MSPE).
We also proposed a method to generate uniform random stable AR filters in the root domain, in order to compute the reference curve. This  may be of some independent interest on its own right.
Simulation results were provided demonstrating the performance of our proposed approach.




\section*{Acknowledgment}
This research was funded by the Defense Advanced Research Projects Agency (DARPA) under grant number W911NF-14-1-0508.

\bibliographystyle{IEEEtran}
\balance
\bibliography{ARMA,Clustering}

\end{document}